\newtheorem{theorem}{Theorem}
\newcommand{\eg}{\emph{e.g.,}\xspace}
\newcommand{\etal}{\emph{et al.}\xspace}
\newcommand{\ie}{\emph{i.e.,}\xspace}
\DeclareMathOperator\arcosh{arcosh}
\newcommand{\camera}[1]{\textcolor{black}{#1}}
\definecolor{Gray}{gray}{0.9}
\title{Hyperbolic Busemann Learning with Ideal Prototypes}
\author{
  Mina Ghadimi Atigh \\
  University of Amsterdam
   \And
   Martin Keller-Ressel\\
   Technische Universität Dresden
   \And
   Pascal Mettes\\
   University of Amsterdam
}
\begin{document}

\maketitle

\begin{abstract}
Hyperbolic space has become a popular choice of manifold for representation learning of 
\camera{various datatypes}
from tree-like structures and text to graphs. Building on the success of deep learning with prototypes in Euclidean and hyperspherical spaces, a few recent works have proposed hyperbolic prototypes for classification. Such approaches enable effective learning in low-dimensional output spaces and can exploit hierarchical relations amongst classes, but require privileged information about class labels to position the hyperbolic prototypes. In this work, we propose Hyperbolic Busemann Learning. The main idea behind our approach is to position prototypes on the ideal boundary of the Poincar\'{e} ball, which does not require prior label knowledge. To be able to compute proximities to ideal prototypes, we introduce the penalised Busemann loss. We provide theory supporting the use of ideal prototypes and the proposed loss by proving its equivalence to logistic regression in the one-dimensional case. Empirically, we show that our approach provides a natural interpretation of classification confidence, while outperforming recent hyperspherical and hyperbolic prototype approaches.
\end{abstract}

\section{Introduction}

Classification by prototypes has a long tradition in machine learning. Foundational solutions such as the Nearest Mean Classifier~\cite{webb2003statistical} represent classes as the mean prototypes in a fixed feature space. Following approaches that learning a metric space for class prototypes~\cite{mensink2013distance}, deep learning with prototypes as points in network output spaces has gained traction~\cite{ding2020graph,guerriero2018deep,mancini2019knowledge,pahde2021multimodal,ren2018meta,snell2017prototypical,xu2020attribute,yu2020episode}.
In line with foundational approaches, the prototypes are positioned by computing the mean over all training examples for each class. Deep learning by mean prototypes has shown to be effective for tasks such as few-shot learning~\cite{ding2020graph,pahde2021multimodal,ren2018meta,snell2017prototypical} and zero-shot recognition~\cite{snell2017prototypical,xu2020attribute,yu2020episode}.

To avoid the need to re-learn prototypes or to enable the use of prior label knowledge, several recent works have proposed deep networks with prototypes in non-Euclidean output spaces. Hyperspherical prototype approaches initialize prototypes on the sphere or its higher-dimensional generalization, based on pair-wise angular separation~\cite{mettes2019hyperspherical,shen2021spherical} or with the help of word embeddings~\cite{barz2020deep,thong2020open} using a cosine similarity loss between example outputs and class prototypes. Hyperspherical prototypes alleviate the need to re-position prototypes continuously and allow to optionally include prior semantic knowledge about the classes. Recently, a few works have extended prototype-based learning to the hyperbolic domain, where prototypes are obtained by embedding label hierarchies~\cite{liu2020hyperbolic,long2020searching}. While hyperbolic prototype approaches provide more hierarchically coherent classification results, prior hierarchical knowledge is required to embed prototypes. This paper strives to combine the best of both non-Euclidean worlds, namely efficient low-dimensional embeddings from hyperbolic prototypes and the knowledge-free positioning from hyperspherical prototypes.

\begin{figure}[t]
\centering
\includegraphics[trim=0 0 0 0,clip=true, width=0.9\textwidth]{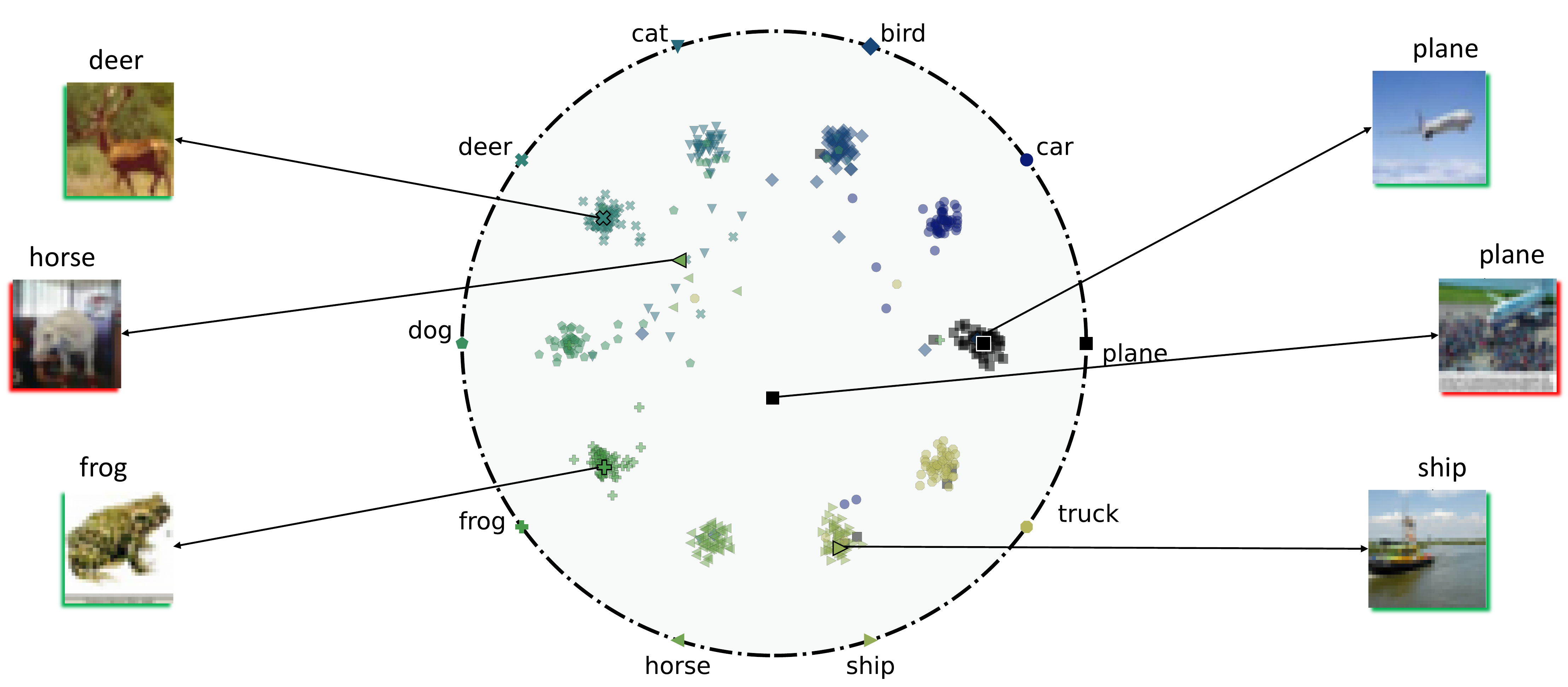}
\caption{\textbf{Visualization of the output space for Hyperbolic Busemann Learning} in two dimensions on CIFAR-10. The penalized Busemann loss optimizes images to be close to their corresponding ideal prototypes while forcing all examples towards the origin to avoid over-confidence, \eg\ for the \emph{deer}, top \emph{plane}, and \emph{ship} examples. Ambiguous cases, such as the \emph{horse} and bottom \emph{plane} examples are projected closer to the origin, providing a natural interpretation of classification confidence.}
\label{fig:fig1}
\end{figure}

We make three contributions in this work. First, we introduce a hyperbolic prototype network with class prototypes given as points on the ideal boundary of the Poincar\'{e} ball model of hyperbolic geometry. Second, we propose the penalized Busemann loss, which enables us to compute proximities between example outputs in hyperbolic space and prototypes at the ideal boundary, an impossible task for existing distance metrics, which put the ideal boundary at infinite distance from all other points in hyperbolic space. Third, we provide a theoretical link between our hyperbolic prototype approach and logistic regression. We show that our choices for output manifold, for prototypes on the ideal boundary, and for the proposed loss provide a direct generalization of the logistic regression model. Experiments on three datasets show that our approach outperforms both hyperspherical and hyperbolic prototype approaches. Moreover, our hyperbolic output space provides a natural interpretation of closeness to the ideal boundary as classification confidence, see Figure~\ref{fig:fig1}. \camera{The code is available at \href{https://github.com/MinaGhadimiAtigh/Hyperbolic-Busemann-Learning}{https://github.com/MinaGhadimiAtigh/Hyperbolic-Busemann-Learning}.} 
\section{Related work}
As an early adaptation of prototypes in deep networks, the center loss~\cite{wen2016discriminative} served as an additional loss for softmax cross-entropy optimization, where all outputs of a class were pulled to one point to help minimize intra-class variation. Closer to the original Nearest Mean Classifier objective, Prototypical Networks introduce a prototype-only approach where each class is represented as the mean output of its corresponding training examples~\cite{snell2017prototypical}. Learning with prototypes from class means has shown to be effective for few-shot learning~\cite{ding2020graph,pahde2021multimodal,ren2018meta,snell2017prototypical}, zero-shot recognition~\cite{snell2017prototypical,xu2020attribute,yu2020episode}, and domain adaptation~\cite{pan2019transferrable}. Defining prototypes as class means is feasible in few-shot learning by sampling all samples of a class in a mini-batch. For general supervised learning, DeepNCM~\cite{guerriero2018deep} and DeepNNO~\cite{mancini2019knowledge} propose alternating optimization schemes, where the prototypes are fixed to update the network backbone and vice versa. To avoid the need for constant prototype updating, several works have proposed to use the hypersphere as output space instead of the Euclidean space. On the hypersphere, prototypes can be positioned \emph{a priori} based on angular separation~\cite{mettes2019hyperspherical,shen2021spherical}, using an orthonormal basis~\cite{barz2020deep}, or using prior knowledge from word embeddings~\cite{barz2020deep,mettes2019hyperspherical,thong2020open}. Similar to hyperspherical prototype approaches, we seek to position prototypes at unit distance from the origin. In contrast, we operate on a hyperbolic manifold, which provides a theoretical link to logistic regression and obtains better empirical results than a hypershperical manifold and loss.

In this work, we strive for a prototype-based network on hyperbolic manifolds, which have become a popular choice for embedding data and deep network layers. Hyperbolic embeddings have shown to represent tree-like structures~\cite{sarkar2011low,sala2018representation}, text~\cite{aly2019every,tifrea2018poincar,zhu2020hypertext}, cells~\cite{klimovskaia2020poincare}, networks~\cite{keller2020hydra}, multi-relational knowledge graphs~\cite{balazevic2019multi}, and general taxonomies~\cite{ganea2018hyperbolic,law2019lorentzian,nickel2017poincare,nickel2018learning,yu2019numerically} with minimal distortion. In deep networks, hyperbolic alternatives have been proposed for fully-connected layers~\cite{ganea2018hyperbolic2,shimizu2021hyperbolic}, convolutional layers~\cite{shimizu2021hyperbolic}, recurrent layers~\cite{ganea2018hyperbolic2}, classification layers~\cite{cho2019large,ganea2018hyperbolic2,shimizu2021hyperbolic}, and optimizers~\cite{becigneul2019riemannian,bonnabel2013stochastic,zhang2016riemannian}. Hyperbolic generalizations have also been introduced for graph networks~\cite{bachmann2020constant,chami2019hyperbolic,liu2019hyperbolic,lou2020differentiating,zhang2021lorentzian,zhu2020graph} and generative models~\cite{bose2020latent,mathieu2020riemannian}. Where hyperbolic-based deep learning approaches commonly operate within the Poincar\'e model represented by the open unit ball, we propose to position class prototypes on its ideal boundary -- the shell of the unit ball -- allowing us to benefit from the representational power of hyperbolic geometry without requiring prior knowledge to operate.

A few works have recently proposed prototype-based approaches with hyperbolic output spaces. Hyperbolic Image Embeddings by Khrulkov \etal~\cite{khrulkov2020hyperbolic} include a prototype-based setting as a direct hyperbolic generalization of Prototypical Networks~\cite{snell2017prototypical}, allowing for few-shot learning through hyperbolic averaging. Beyond few-shot learning, Long \etal~\cite{long2020searching} introduce a hyperbolic action network, where action prototypes are projected to a hyperbole based on prior knowledge about hierarchical relations. Videos are in turn projected to the same space and compared to the prototypes using the hyperbolic distance, enabling hierarchically consistent action recognition and zero-shot recognition. Similarly, Liu \etal~\cite{liu2020hyperbolic} perform zero-shot recognition using hyperbolic prototypes positioned based on known hierarchical relations. In this work, we also employ prototypes in hyperbolic space, but do not require prior knowledge about hierarchies. 
\camera{Recent approaches with hyperbolic output spaces have shown to be effective in low dimensionalities~\cite{khrulkov2020hyperbolic, long2020searching, liu2020hyperbolic}. Low dimensional embeddings have several practical use cases and benefits. We list two important use cases. \emph{Compactness:} 
Hyperbolic Busemann Learning enables us to embed high-dimensional inputs in low-dimensional output spaces. This ability has potential benefits in applications such as data transmission, compression, and storage. \emph{Interpretability:} Obtaining high performance with few output dimensions (\ie two or three output dimensions) provides the potential to interpret the model performance and visualize the output space, see Figure~\ref{fig:fig1}.} 
\section{Learning with hyperbolic ideal prototypes}

\subsection{Problem formulation}
We are given a training set $\{(\mathbf{x}_i, y_i)\}_{i=1}^{N}$ with $N$ examples, where $\mathbf{x}_i \in \mathbb{R}^I$ denotes an input with dimensionality $I$
and $y_i \in \{1,..,C\}$ denotes a label from a set of size $C$.
We seek to learn a projection of inputs to a hyperbolic manifold in which can compute proximities to class prototypes. 
We do so through a transformation $\mathcal{F}$ in Euclidean space, followed by an exponential map from the tangent space $\mathcal{T}_x \mathcal{M}$ to a hyperbolic manifold $\mathcal{M}$:
\begin{equation}
\mathbf{z} = \exp_{\mathbf{v}}(\mathcal{F}(\mathbf{x} ; \theta)),
\end{equation}
where $\theta$ denotes the parameters to learn the transformation. The transformation can be any function, e.g., a deep network or a linear layer akin to logistic regression. The hyperbolic manifold in which $\mathbf{z}$ operates is described throughout this work by the Poincar\'{e} ball model $(\mathbb{B}_d, g^{\mathbb{B}}_{\mathbf{z}})$, with open ball $\mathbb{B}_d = \{\mathbf{z} \in \mathbb{R}^d \ |\ ||\mathbf{z}||^2 < 1\}$ and the Riemannian metric tensor $g^{\mathbb{B}}_{\mathbf{z}} = 4(1 -  ||\mathbf{z}||^2)^{-2} \mathbf{I}_d$~\cite{cannon1997hyperbolic,ratcliffe2006foundations}. 
Setting its center to $\mathbf{v}=0$ the exponential map in the Poincar\'e model is given by:
\begin{equation}
\exp_{0}(\mathbf{x}) = \tanh (||\mathbf{x}||/2) \frac{\mathbf{x}}{||\mathbf{x}||}.
\end{equation}
Furthermore, the geodesic distance between two points $z_1, z_2 \in \mathbb{B}_d$ is given by:
\begin{equation}
\label{eqn:gdist}
d_{\mathbb{B}}(\mathbf{z}_1,\mathbf{z}_2) = \arcosh \left(1 + 2 \frac{||\mathbf{z}_1 - \mathbf{z}_2||^2}{(1 - ||\mathbf{z}_1||^2)(1 - ||\mathbf{z}_2||^2)}\right).
\end{equation}
To train our model, we compare the projected inputs to their class labels, represented by the prototypes $P = \{\mathbf{p}_1,..,\mathbf{p}_C\}$. That is, the overall training loss is given as the aggregation of distance-based individual losses $\ell$ between projected inputs and the prototype of their respective class labels:
\begin{equation}
\mathcal{L}(\theta) = \frac{1}{N} \sum_{i=1}^{N} \ell(\exp_{0}(\mathcal{F}(\mathbf{x} ; \theta)), \mathbf{p}_{y_i}).
\end{equation}

\subsection{Ideal prototypes and the penalized Busemann loss}


The central idea of our approach is to position class prototypes at ideal points, which represent points at infinity in hyperbolic geometry. In the Poincar\'{e} model, the ideal points form the boundary of the ball:
\begin{equation}
\mathbb{I}_d = \{\mathbf{z} \in \mathbb{R}^d : z_1^2 + \cdots + z_d^2 = 1 \}.
\end{equation}
Thus, the set of ideal points of hyperbolic space $\mathbb{B}_d$ is homeomorphic to the hypersphere $\mathbb{S}_d$. As a consequence, any of the methods used for prototype embedding into $\mathbb{S}_d$ can be used to embed prototypes into $\mathbb{I}_d$. In particular, for $d=2$, prototypes can be placed uniformly on the unit sphere $\mathbb{S}_2$, while class-agnostic prototype embedding based on separation can be used for $d\geq3$~\cite{mettes2019hyperspherical}.

A key issue that needs to be solved, is that ideal points are at infinite geodesic distance from all other points in $\mathbb{B}_d$ and hence the geodesic distance can not be used directly for prototype-based learning. Therefore, we propose the penalized Busemann loss for hyperbolic learning with ideal prototypes. The Busemann function, originally introduced in \cite{busemann1955geometry} (see also Def.~II.8.17 in \cite{bridson2013metric}), can be considered a distance to infinity and may be defined in any metric space. Let $\mathbf{p}$ be an ideal point and $\gamma_\mathbf{p}$ a geodesic ray, parametrized by arc length, tending to $\mathbf{p}$. Then the Busemann function with respect to $\mathbf{p}$ is defined for $\mathbf{z} \in \mathbb{B}_d$ as:
\begin{equation}
\label{eqn:busf}
b_\mathbf{p}(\mathbf{z}) = \lim_{t \rightarrow \infty} (d_{\mathbb{B}}(\gamma_\mathbf{p}(t), \mathbf{z}) - t).
\end{equation}
In the Poincar\'{e} model, the limit can be explicitly calculated and the Busemann function is given as:
\begin{equation}
\label{eqn:bus}
b_\mathbf{p}(\mathbf{z}) = \log \frac{||\mathbf{p} - \mathbf{z}||^2}{(1 - ||\mathbf{z}||^2)}.
\end{equation}

\camera{The full derivation of~\eqref{eqn:bus} is provided in Appendix~\ref{appendix}.} Our proposed loss combines the Busemann function with a penalty term:
\begin{equation}
\label{eqn:loss}
\ell(\mathbf{z}, \mathbf{p}) = b_\mathbf{p}(\mathbf{z}) - \phi(d) \cdot \log(1 - ||\mathbf{z}||^2),
\end{equation}
with $\phi(d)$ a scaling factor for the penalty term which is a function of the dimension of the hyperbolic space. The first term in the penalized Busemann loss steers a projected input $\mathbf{z}$ towards class prototype $\mathbf{p}$, while the second term penalized overconfidence, \ie values close to the ideal boundary of $\mathbb{B}_d$. The prototypes are positioned before training and remain fixed, hence the backpropagation only needs to be done with respect to the inputs. The gradient of $\ell(\exp_0(\mathbf{x}), \mathbf{p})$ with respect to $\mathbf{x}$ is given as:
\begin{equation}
\nabla_{\mathbf{x}}\, \ell(\exp_0(\mathbf{x}), \mathbf{p}) = (\mathbf{x} - \mathbf{p}) \frac{\tanh(||\mathbf{x}||)}{||\mathbf{x}|| - \tanh(||\mathbf{x}||) \, \mathbf{p} \cdot \mathbf{x}} + \boldsymbol{1}_d\, \mathbf{p} \cdot \mathbf{x} \frac{\tanh(||\mathbf{x}||)/||\mathbf{x}|| - 1}{||\mathbf{x}|| - \tanh(||\mathbf{x}||) \, \mathbf{p} \cdot \mathbf{x}} + \tanh(||\mathbf{x}||/2),
\end{equation}
with $\boldsymbol{1}_d$ a $d$-dimensional vector of ones. Figure~\ref{fig:penbusemann} shows a color map of the penalized Busemann loss itself and its radial gradient for a prototype located at ideal point $\mathbf{p} = (1,0)$.

\begin{figure}[t]
\centering
\includegraphics[width=0.325\textwidth, trim = 2 2 2 2, clip=true]{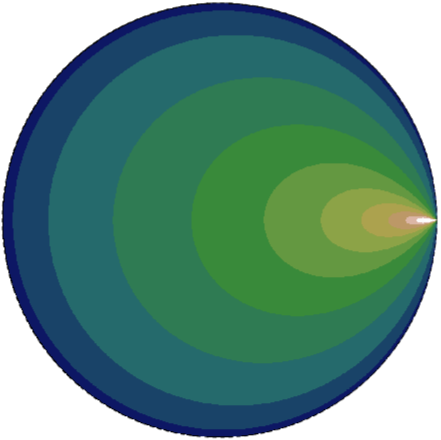}
\hspace{25pt}
\includegraphics[width=0.325\textwidth, trim = 2 2 2 2,  clip=true]{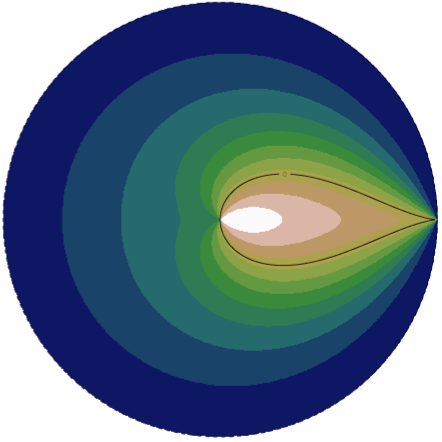}
\caption{\textbf{Visualization of the penalized Busemann loss.} The left plot shows a color map of the penalized Busemann loss function, see \eqref{eqn:loss}, in dimension two for a single ideal prototype located at $\mathbf{p} = (1,0)$; contour lines are log-spaced. The right plot shows its gradient with respect to the radial coordinate. During loss minimization, points in the drop-shaped region adjacent to the prototype will move outwards, while points outside of the region will move inwards.}
\label{fig:penbusemann}
\end{figure}
\noindent

In the penalized Busemann loss, $\phi(d)$ governs the amount of regularization. Below, we show that this function should be linear in the number of dimensions.
\begin{theorem}
\label{section:theorem}
For any $d \ge 4$ and ideal point $\mathbf{p}$ the penalized Busemann loss $\ell(\mathbf{z}, \mathbf{p})$ is the log-likelihood of a density $f(\mathbf{z}, \mathbf{p}) = \tfrac{1}{C(d)} \exp\left(-\ell(\mathbf{z}, \mathbf{p})\right)$ on the Poincar\'e ball $\mathbb{B}_d$ if and only if $\phi(d) > d-2$.
\end{theorem}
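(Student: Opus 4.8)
The plan is to read ``$f(\mathbf z,\mathbf p)=\tfrac1{C(d)}\exp(-\ell(\mathbf z,\mathbf p))$ is a density on $\mathbb B_d$'' as: $f$ is a density with respect to the Riemannian volume measure $d\mu_{\mathbb B}(\mathbf z)=\bigl(2/(1-\|\mathbf z\|^2)\bigr)^d\,d\mathbf z$ of the Poincar\'e ball (the only normalisation under which the stated equivalence can possibly hold — with Lebesgue measure the constant is finite for all $\phi(d)>-2$). Thus the claim is equivalent to finiteness of
\[
C(d)=\int_{\mathbb B_d}\exp\!\bigl(-\ell(\mathbf z,\mathbf p)\bigr)\,d\mu_{\mathbb B}(\mathbf z).
\]
First I would simplify the integrand. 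From \eqref{eqn:bus} and \eqref{eqn:loss} one gets $\exp(-\ell(\mathbf z,\mathbf p))=(1-\|\mathbf z\|^2)^{\phi(d)+1}\,\|\mathbf p-\mathbf z\|^{-2}$, hence
\[
C(d)=2^{d}\int_{\mathbb B_d}\frac{(1-\|\mathbf z\|^2)^{\phi(d)+1-d}}{\|\mathbf p-\mathbf z\|^{2}}\,d\mathbf z .
\]
The integrand is positive and continuous on the open ball and $\mathbf p\notin\mathbb B_d$, so $C(d)\in(0,\infty]$ and the whole question is convergence of this integral near the boundary sphere; by rotational invariance of the problem I may fix $\mathbf p=\mathbf e_1$.

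Next I would localise: split $\mathbb B_d$ into a compact core (where the integrand is bounded, hence contributes finitely) and two boundary zones — (a) a collar of $\partial\mathbb B_d$ with a fixed small cap around $\mathbf p$ removed, and (b) a small neighbourhood of $\mathbf p$. In zone (a), $\|\mathbf p-\mathbf z\|$ is bounded away from $0$, so in polar coordinates the contribution is comparable to $\int^{1}(1-r)^{\phi(d)+1-d}\,dr$, which is finite iff $\phi(d)+1-d>-1$, i.e. iff $\phi(d)>d-2$; restricting further to a subregion near the antipode $-\mathbf p$, where $\|\mathbf p-\mathbf z\|$ is also bounded above, shows the divergence is genuine when $\phi(d)\le d-2$. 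This already gives the ``only if'' direction, and reduces ``if'' to showing zone (b) is finite whenever $\phi(d)>d-2$ and $d\ge4$.

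Zone (b) is the substantive part. I would use coordinates adapted to the ideal point: write $\mathbf z=(1-a)\mathbf e_1+\mathbf w$ with $a>0$ small and $\mathbf w\in\mathbb R^{d-1}$, for which $1-\|\mathbf z\|^2=a(2-a)-\|\mathbf w\|^2$ and $\|\mathbf p-\mathbf z\|^2=a^2+\|\mathbf w\|^2$, the admissible range being $\|\mathbf w\|^2<a(2-a)$. After the rescaling $\mathbf w=\sqrt a\,\boldsymbol\omega$ (Jacobian $a^{(d-1)/2}$) the zone-(b) integral becomes, up to a constant,
\[
\int_0^{\epsilon} a^{\,\phi(d)-(d+1)/2}\,J(a)\,da,\qquad
J(a)=\int_{\|\boldsymbol\omega\|^2<2-a}\frac{(2-a-\|\boldsymbol\omega\|^2)^{\phi(d)+1-d}}{a+\|\boldsymbol\omega\|^2}\,d\boldsymbol\omega .
\]
I would then analyse $J(a)$ as $a\to0^+$: near $\boldsymbol\omega=0$ it stays bounded precisely because the transverse dimension satisfies $d-1>2$, so $\int_{\|\boldsymbol\omega\|<\delta}\|\boldsymbol\omega\|^{-2}\,d\boldsymbol\omega<\infty$ in $\mathbb R^{d-1}$ — this is the single place the hypothesis $d\ge4$ is used — while near the edge $\|\boldsymbol\omega\|^2=2$ it is finite and bounded in $a$ exactly when $\phi(d)>d-2$. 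Hence for $d\ge4$ and $\phi(d)>d-2$ one has $J(a)\asymp1$, and the outer integral converges because $\phi(d)-(d+1)/2>-1$ is implied by $\phi(d)>d-2$ once $d\ge4$ (equivalently $d-2>(d-1)/2$). Combining the compact core, zone (a) and zone (b) gives $C(d)<\infty$ whenever $\phi(d)>d-2$, which is the ``if'' direction.

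The main obstacle is the joint degeneration in zone (b), where $\|\mathbf p-\mathbf z\|\to0$ and $1-\|\mathbf z\|^2\to0$ simultaneously: one must verify that the vanishing of the numerator together with the Jacobian of the adapted coordinates exactly compensates the $\|\mathbf p-\mathbf z\|^{-2}$ blow-up, and it is here that the dimension matters — for $d\le2$ zone (b) forces a strictly stronger lower bound on $\phi(d)$ than $d-2$, which is why the clean equivalence is only claimed for $d\ge4$. A useful sanity check while setting up the coordinates is that $\exp(-b_\mathbf p(\mathbf z))=(1-\|\mathbf z\|^2)/\|\mathbf p-\mathbf z\|^2$ is constant on the horospheres $\{\,a(2-a)-\|\mathbf w\|^2=c\,(a^2+\|\mathbf w\|^2)\,\}$, which both motivates the change of variables and guards against algebra slips.
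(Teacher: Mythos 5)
Your proposal is correct, and it reaches the same conclusion by a genuinely different route. The paper works globally in hyperspherical coordinates $(r,\varphi_1,\dots)$ centred at the origin and disposes of the singularity at $\mathbf{p}$ with the single two-sided estimate $\sin^2(\varphi_1)\le 1-2r\cos(\varphi_1)+r^2\le 4$; this collapses the whole question into one separated radial--angular integral, with the hypothesis $d\ge 4$ surfacing as convergence of $\int_0^\pi\sin(\varphi_1)^{d-4}\,d\varphi_1$ and the condition $\phi(d)>d-2$ as convergence of $\int_0^1(1-r^2)^{\phi(d)+1-d}r^{d-1}\,dr$ in both the upper and lower bounds. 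You instead localise: the collar away from $\mathbf{p}$ gives the ``only if'' direction immediately (and your restriction near $-\mathbf{p}$, where the denominator is bounded above, is exactly the honest way to certify divergence), while the parabolic rescaling $\mathbf{w}=\sqrt{a}\,\boldsymbol\omega$ near the ideal point isolates the joint degeneration and makes visible that $d\ge 4$ is precisely the integrability of $\|\boldsymbol\omega\|^{-2}$ in the transverse $\mathbb{R}^{d-1}$; your exponent bookkeeping ($a^{\phi(d)-(d+1)/2}$ outside, $\phi(d)>(d-1)/2$ implied by $\phi(d)>d-2$ for $d\ge 3$) checks out. The paper's argument is shorter and entirely elementary, but the key estimate is somewhat unmotivated; yours is longer but explains \emph{where} each hypothesis bites and adapts more readily to asking what happens for $d=2,3$ (your closing remark that the clean threshold may fail there is consistent with, though not settled by, either argument, and is outside the theorem's scope). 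No gaps.
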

\begin{proof}
To show that $f(\mathbf{z}, \mathbf{p})$ is a density, we have to show that the normalization constant
\[C(d) = \int_{\mathbb{B}_d} \exp\left(-\ell(\mathbf{z}, \mathbf{p})\right) dV\]
is finite. Here, $dV$ is the hyperbolic volume element on $\mathbb{B}_d$, which is equal to $2^d (1 - ||\mathbf{z}||^2)^{-d} dz_1 \dotsm dz_d$, see \cite{ratcliffe2006foundations}. Furthermore, using invariance of the integrand under rotation, we can assume without loss of generality that $\mathbf{p}$ is equal to the unit vector $e_1 = (1,0, \dots, 0)$, which leads to 
\[C(d) = 2^d \int_{\mathbb{B}_d} \left(\frac{(1 - ||\mathbf{z}||^2)^{\phi(d) + 1 - d}}{1 - 2z_1 + ||\mathbf{z}||^2}\right) dz_1 \dotsm dz_d.\]
Switching to (Euclidean) hyperspherical coordinates $(r, \varphi_1, \dots, \varphi_d)$, the integral factorizes into $C(d) = 2^d \cdot C_1(d)\cdot C_2(d)$, where 
\begin{align*}
C_1(d) &= \int_0^1 \int_0^\pi \left(\frac{(1 - r^2)^{\phi(d) + 1 - d}}{1 - 2r\cos(\varphi_1) + r^2}\right) r^{d-1} \sin(\varphi_1)^{d-2} dr \,d\varphi_1\\
C_2(d) &= \int_0^\pi \sin(\varphi_2)^{d-3} d\varphi_2 \dotsc \int_0^{\pi} \sin(\varphi_{d-2})^2 d\varphi_{d-2} \cdot \int_0^{2\pi} \sin(\varphi_{d-1}) d\varphi_{d-1}.
\end{align*}
Clearly, $C_2(d)$ is finite, independently of $d$, such that it remains to consider $C_1(d)$. Instead of solving the integral exactly, we use the elementary estimate
\[\sin(\varphi_1)^2 \le 1 - 2r\cos(\varphi_1) + r^2 \le 4,\]
valid for all $r \in [0,1]$. This estimate allows us to obtain both a lower and an upper bound on $C_1(d)$. The upper bound becomes
\[C_1(d) \le \int_0^1 (1 - r^2)^{\phi(d) + 1 - d}\,r^{d-1} dr \int_0^\pi \sin(\varphi_1)^{d-4} d\varphi_1,\]
which (given that $d \ge 4$) is finite if and only if $\phi(d) > d - 2$. The lower bound becomes
\[C_1(d) \ge \frac{1}{4} \int_0^1 (1 - r^2)^{\phi(d) + 1 - d}\, r^{d-1} dr \int_0^\pi \sin(\varphi_1)^{d-2} d\varphi_1,\]
which (given that $d \ge 2$) is also finite if and only if $\phi(d) > d - 2$, which completes the proof. 
\end{proof}

Given a trained model, inference is performed by projecting the test example $\mathbf{x}$ and selecting the class with the smallest prototype distance:
\begin{equation}
y^{\star} = \arg \min_{\mathbf{p} \in P} \ell(\exp_{0}(\mathcal{F}(\mathbf{x} ; \theta)), \mathbf{p}).
\end{equation}
\camera{The Poincar\'{e} ball model of hyperbolic geometry has the conformal property, \ie angles in the Poincar\'{e} ball model coincide with angles in Euclidean space~\cite{ratcliffe1994foundations}. Thus, the cosine similarity between two ideal points remains a meaningful measure of their similarity, even in hyperbolic geometry. As the penalty term is the same for all prototypes, we can equivalently minimize the Busemann function of the prototypes directly. Moreover, as the Busemann function is a decreasing function of the cosine similarity, prediction is equivalent to hyperspherical prototype inference:}
\begin{equation}
y^{\star} = \arg \max_{\mathbf{p} \in P} \frac{\mathbf{z}}{||\mathbf{z}||} \cdot \mathbf{p}, \qquad \mathbf{z} = \exp_{0}(\mathcal{F}(\mathbf{x} ; \theta)).
\end{equation}

\subsection{Relation to logistic regression}
Logistic regression is a foundational algorithm for classification and a pillar in supervised learning for deep networks with cross entropy optimization. Here, we show that hyperbolic ideal prototype learning is a natural generalization of logistic regression by showing its equivalence when using $d=1$ dimensions in our approach.

\begin{theorem}
For dimension $d$ = 1 and with penalty $\phi(1) = 1$, Hyperbolic Busemann Learning with a single linear layer is equivalent to logistic regression.
\end{theorem}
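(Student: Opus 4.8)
The plan is to specialize every ingredient of Hyperbolic Busemann Learning to $d=1$ and reduce the per-example loss, by elementary algebra, to an increasing affine function of the logistic-regression negative log-likelihood; since an increasing affine reparametrization of the objective changes neither the minimizers nor the induced decision rule, this yields the claimed equivalence. First I would fix the $d=1$ geometry. The Poincar\'e ball is the open interval $\mathbb{B}_1=(-1,1)$, and its ideal boundary is the two-point set $\mathbb{I}_1=\{-1,+1\}$; in a binary problem ($C=2$, as logistic regression requires) these are exactly the two fixed class prototypes, so I identify the labels with $\{-1,+1\}$, say $y=1\leftrightarrow\mathbf{p}=+1$ and $y=0\leftrightarrow\mathbf{p}=-1$. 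A single linear layer outputs the scalar logit $a=\mathbf{w}^{\top}\mathbf{x}+b=\mathcal{F}(\mathbf{x};\theta)$, and since $\exp_0$ reduces on scalar inputs to $a\mapsto\tanh(a/2)$, the Poincar\'e representation is $z=\exp_0(a)=\tanh(a/2)\in(-1,1)$ — precisely the logit passed through the hyperbolic chart.

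Next I would evaluate the penalized Busemann loss in closed form. Substituting $\phi(1)=1$ into \eqref{eqn:loss} and the one-dimensional case of \eqref{eqn:bus}, the two logarithms collapse: using $1-z^2=(1-z)(1+z)$ one gets
\[
\ell(z,+1)=\log\frac{(1-z)^2}{1-z^2}-\log(1-z^2)=-2\log(1+z),
\qquad
\ell(z,-1)=-2\log(1-z).
\]
Then I would invoke the sigmoid identities $1+\tanh(a/2)=2\sigma(a)$ and $1-\tanh(a/2)=2\sigma(-a)$, where $\sigma$ is the logistic function, to rewrite these as $\ell(z,+1)=-2\log2-2\log\sigma(a)$ and $\ell(z,-1)=-2\log2-2\log(1-\sigma(a))$. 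Combining the two cases through the label $y\in\{0,1\}$ gives the single formula
\[
\ell\bigl(\exp_0(\mathcal{F}(\mathbf{x};\theta)),\mathbf{p}_y\bigr)=-2\log2-2\bigl[\,y\log\sigma(a)+(1-y)\log(1-\sigma(a))\,\bigr],
\]
i.e. $\ell=2\,\mathrm{NLL}_{\mathrm{logistic}}(a;y)-2\log2$, where $\mathrm{NLL}_{\mathrm{logistic}}$ is the binary cross-entropy of logistic regression with linear logit $a$.

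Finally I would draw the two conclusions. Averaging over the training set, $\mathcal{L}(\theta)=2\,\mathcal{L}_{\mathrm{logistic}}(\theta)-2\log2$, so $\arg\min_\theta\mathcal{L}(\theta)=\arg\min_\theta\mathcal{L}_{\mathrm{logistic}}(\theta)$: the same weight vector $\mathbf{w}$ and bias $b$ are learned. For inference, $\ell(z,+1)<\ell(z,-1)\iff z>0\iff a>0\iff\sigma(a)>1/2$, exactly the logistic decision rule; the argmax-cosine form of prediction from the excerpt reduces to the same thing, since in dimension one $z/|z|=\mathrm{sign}(a)$. Hence both training and prediction coincide with logistic regression.

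There is no deep obstacle here; the work is the algebraic collapse of the second step and being precise about what ``equivalent'' means, namely that the loss is a fixed positive-affine reparametrization of the logistic negative log-likelihood, hence yields identical optimizers and an identical decision boundary. The only points requiring a little care are the scalar form of $\exp_0$ (the absolute values cancel, leaving $\tanh(a/2)$) and fixing once and for all the correspondence between the two ideal points and the two labels.
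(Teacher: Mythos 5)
Your proposal is correct and follows essentially the same route as the paper: both reduce the $d=1$ penalized Busemann loss by elementary algebra to $2\cdot\mathrm{BCE}-2\log 2$ and identify $\exp_0$ with the logistic function. The only cosmetic difference is that the paper first applies the affine change of coordinates $z'=(z+1)/2$ to map $(-1,1)$ onto $(0,1)$ and the prototypes onto the labels $\{0,1\}$, whereas you work directly in $(-1,1)$ with prototypes $\pm 1$ via the identities $1\pm\tanh(a/2)=2\sigma(\pm a)$; your additional check that the decision rules coincide is a sensible (if routine) supplement to what the paper states.
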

\begin{proof}
Recall that from the machine learning point of view logistic regression can be described as a single linear layer in combination with the logistic activation function and cross-entropy-loss. In one dimension, hyperbolic space $\mathbb{B}_1$ becomes the interval $(-1,1)$. The set of ideal points $\mathbb{I}_1$ consists of the two points $\pm 1$. To identify these prototypes with $0/1$ (the class labels in logistic regression) we introduce a linear change of coordinates to 
\[z' = \frac{z+1}{2}.\]
Under this change of coordinates $\mathbb{B}_1$ becomes the unit interval $(0,1)$ and the prototypes $p'$ (the ideal points) become the endpoints $0$ and $1$. The exponential map, i.e., the embedding of the base learner output $y = w^\top x + w_0$ into $\mathbb{B}_1$, maps $y \in \mathbb{R}$ to 
\[z' = \exp_0(y) = \frac{\tanh(y/2) + 1}{2} = \frac{1}{1 + e^{-y}},\]
which is the logistic function. In $z'$-coordinates, the penalized Busemann loss becomes
\[\ell(z',p') = 2 \log\Big(\frac{|p' - z'|}{2z'(1 - z')}\Big) =  \begin{cases}2 \log\Big(\frac{1}{2(1 - z')}\Big) & \quad \text{if }p'=0\\\log\Big(\frac{1}{2z'}\Big)& \quad \text{if }p'=1\end{cases}.\]
Rewriting this as 
\[\frac{1}{2}\ell(z',p') + \log(2) = -p'\log(z') - (1-p') \log(1-z'),\]
shows that the shifted and scaled penalized Busemann loss coincides with the cross-entropy loss in dimension $d=1$.
\end{proof}

Beyond binary classification, multinomial logistic regression is performed in a one-vs-rest manner with a $K$-dimensional softmax output for $K$ classes. \ie in conventional multinomial logistic regression the output space scales one-to-one with the number of classes. With Hyperbolic Busemann Learning, we provide a different generalization of logistic regression, where the number of output dimensions is independent of the number of classes. Most prominently, we find that most of the classification performance can be maintained in output spaces of small dimension, cf., Tables~\ref{tab:cifar} and~\ref{tab:compare}.
\section{Experiments}

\begin{figure}
\centering
\begin{subfigure}{0.45\textwidth}
\includegraphics[trim=45 1 100 75,clip=true,width=\textwidth]{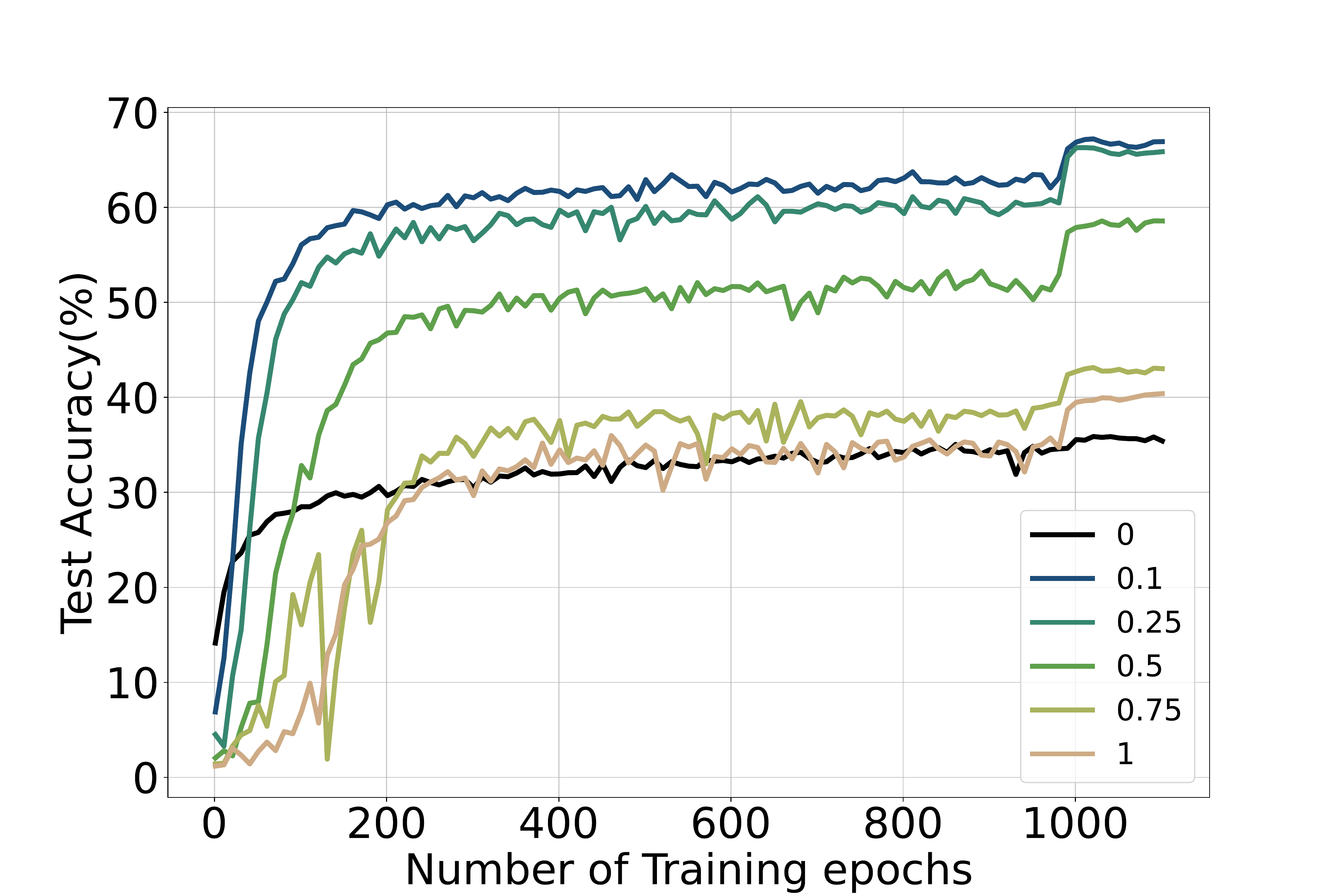}
\caption{CIFAR-100 (50D).}
\end{subfigure}
\hspace{0.5cm}
\begin{subfigure}{0.45\textwidth}
\includegraphics[trim=45 1 100 75,clip=true,width=\textwidth]{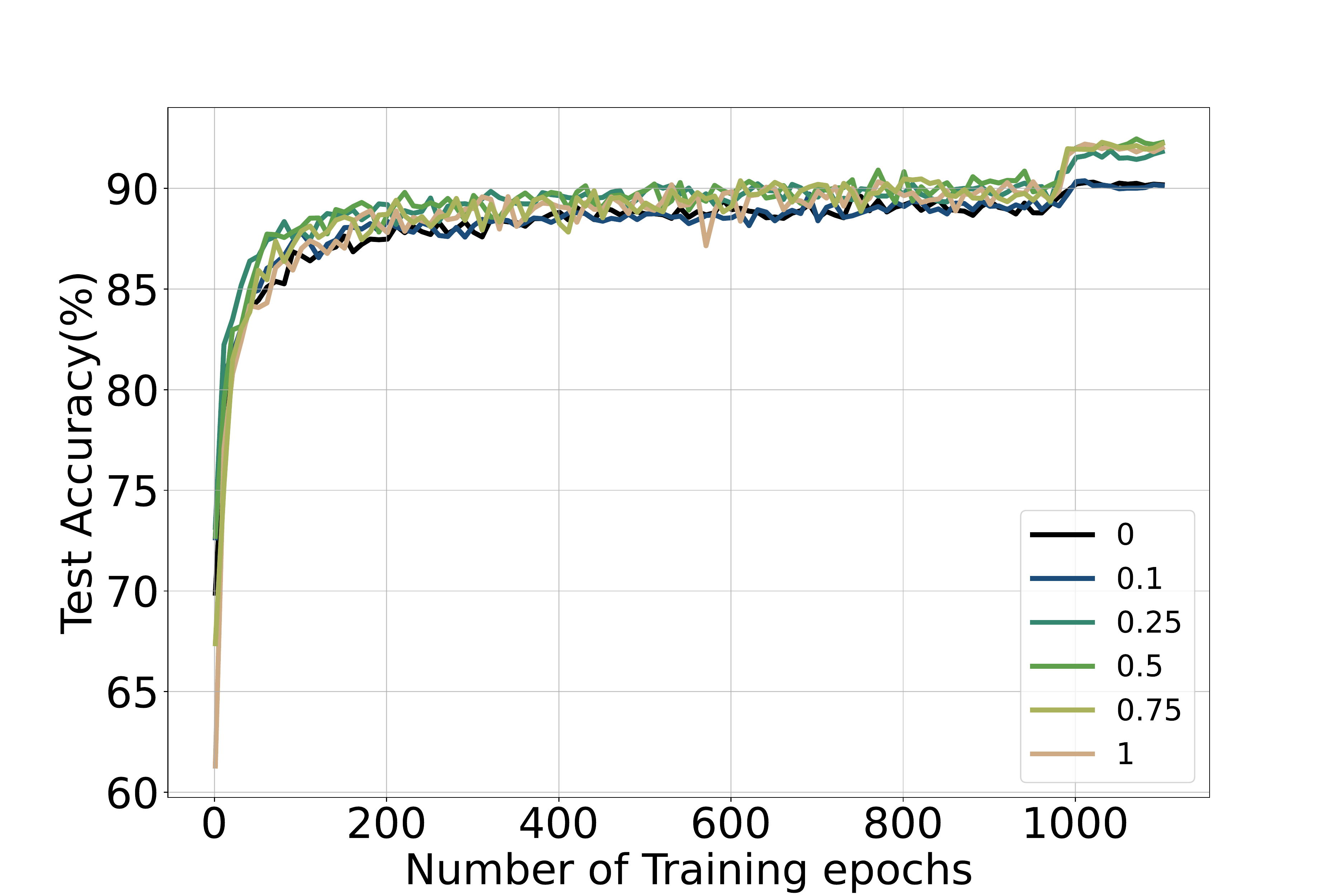}
\caption{CIFAR-10 (5D).}
\end{subfigure}
\caption{\textbf{Ablation study on the penalized Busemann loss} for CIFAR-100 and CIFAR-10. On both datasets, adding a non-negative penalty (0.1 to 1) improves the classification results over the Busemann loss without penalty (0). With more classes, a lower penalty is preferred. The y-axis on CIFAR-10 starts from 60\% to better visualize differences between settings.}
\label{fig:ablation}
\end{figure}

\subsection{Ablation studies}

\noindent
\textbf{Setup.} We first investigate the workings of the penalized Busemann loss. We evaluate the effect of the penalty term on CIFAR-10 and CIFAR-100 using a ResNet-32 backbone. For the experiment, we use Adam with a learning rate of 5e-4, weight decay of 5e-5, batch size of 128, without pre-training. The network is trained for 1,110 epochs with learning rate decay of 10 after 1,000 and 1,100 epochs. The prototypes are given by~\cite{mettes2019hyperspherical} without prior knowledge.
\begin{figure}[t]
\centering
\includegraphics[trim=3 115 3 190,clip=true, width=1\textwidth]{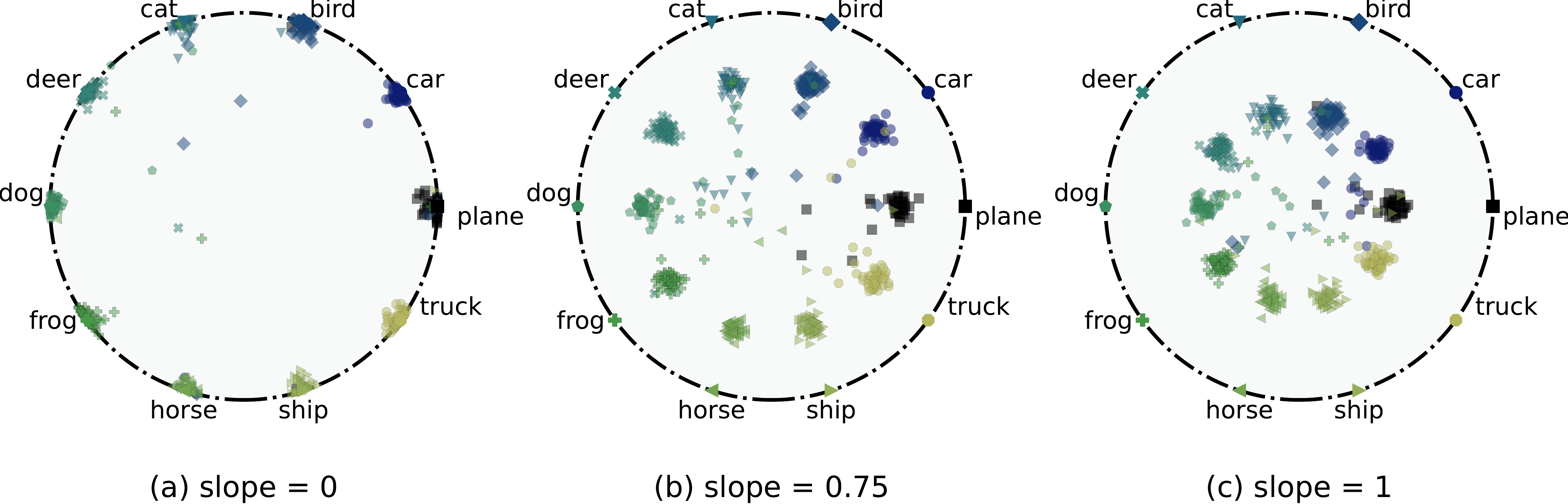}
\caption{\textbf{Visualization of the penalty term} and its effect on the output space with a ResNet-32 on CIFAR-10. Without a penalty, examples are close to the boundary, leading to overconfidence. Contrarily, a high penalty forces examples towards the origin of the Poincar\'{e} ball, leading to inter-class confusion. With the right slope, these factors are balanced, resulting in improved classification.}
\label{fig:interpret_mults}
\end{figure}
\\\\
\noindent
\textbf{Results.} In Figure~\ref{fig:ablation}, we show the results for CIFAR-100 and CIFAR-10 for various settings of the penalty term in the Busemann loss. The penalty term $\phi(d)$ depends on the output dimensionality and should be linear in form as per Theorem 1. In the Figure, we show five settings, corresponding to the slope $s$ of the linear function, \ie\ $\phi(d;s) = s \cdot d$. On both datasets the setting with $s\! =\! 0$, where the penalty term is excluded, performs worst, highlighting the importance of including a penalty in the Busemann loss. For the other settings, the higher the value, the larger the penalty and the stronger the pull towards the origin of the output space. On CIFAR-10, all non-zero slopes improve the classification accuracy. Slope $s\! =\! 0.75$ is slightly preferred with an accuracy of 92.3\% versus 91.9\% for $s\! =\! 1$ and 90.1\% for $s\! =\! 0$. For CIFAR-100, a too strong penalty negatively impacts the classification results. A penalty of $s\! =\! 0.1$ performs best with an accuracy of 65.8\% compared to 39.7\% and 35.9\% for $s\! =\! 1$ and $s\! =\! 0$. We conclude that the penalty term is an influential component of the proposed loss and with more classes a lower slope is recommended for best performance.
\\\\
\noindent
\textbf{Analysis.}
To better understand the penalty term and its effect, we have trained a ResNet-32 backbone on CIFAR-10 using only two output dimensions. In Figure~\ref{fig:interpret_mults}, we visualize the output space for three slope values, along with the ideal prototypes and test examples. The visualization clearly shows the importance of the penalty; without it, examples move closely towards the ideal prototypes leading to over-confidence. On the other hand, a high penalty pulls all examples close to the origin of the Poincar\'{e} ball. Especially with many classes, 
\begin{wrapfigure}{r}{0.4\textwidth}
\vspace{-0.25cm}
\includegraphics[trim=0 0 45 36,clip=true, width=0.40\textwidth]{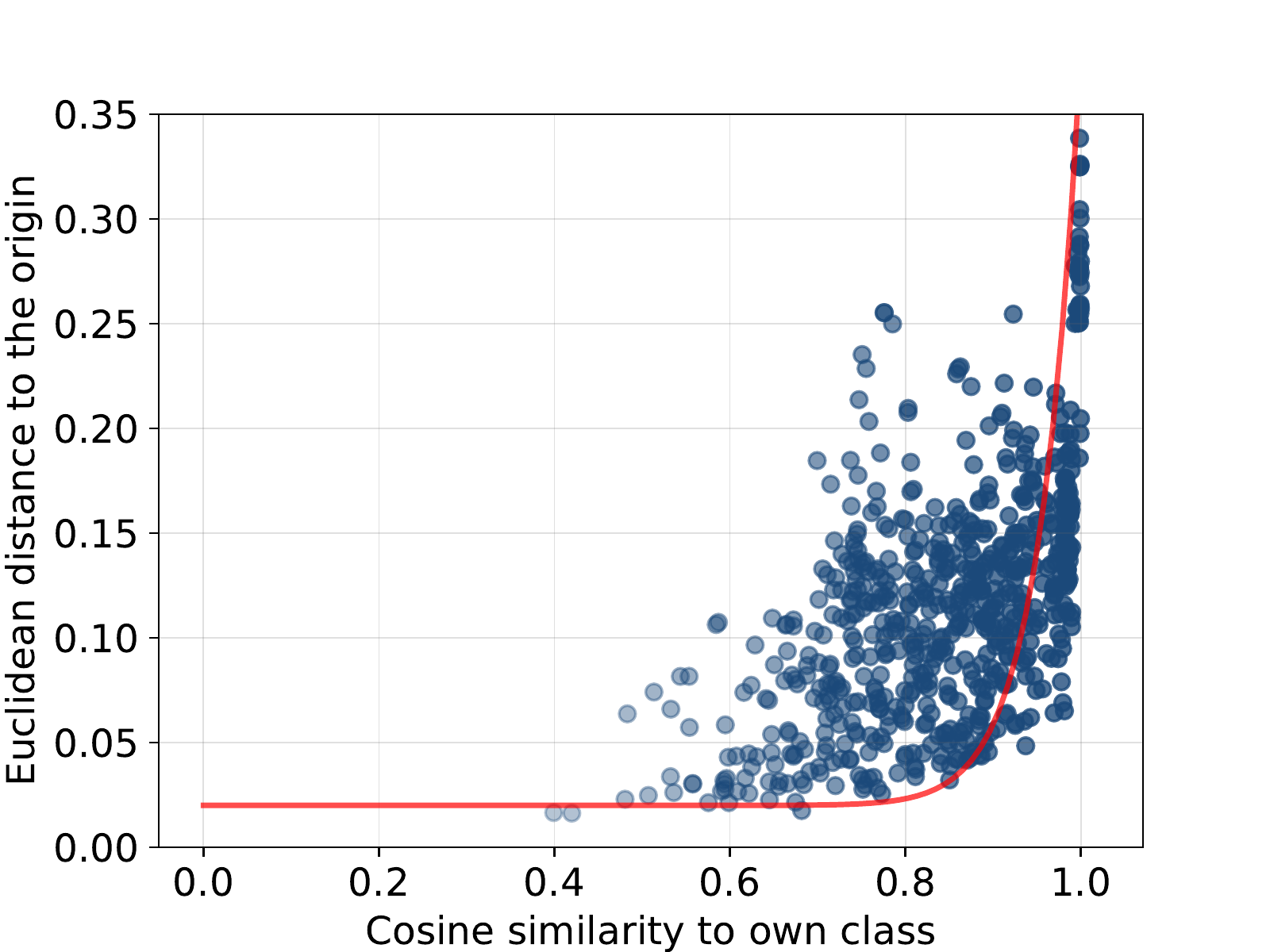}
\caption{\textbf{Confidence interpretation.} The larger the distance to the origin, the more likely the example is correctly classified, showing that our hyperbolic approach provides a natural interpretation of classification confidence.}
\label{fig:acc_dist}
\vspace{-0.5cm}
\end{wrapfigure}
this leads to smaller inter-class variation and higher confusion. The visualization explains the lower performance for a high penalty on CIFAR-100
compared to CIFAR-10. By balancing confidence and inter-class separation, the best classification results are achieved. Figure~\ref{fig:fig1} and Figure~\ref{fig:interpret_mults} show intuitively that the distance to the origin for an example reflects the confidence of the network in classifying the example. We have performed a quantitative analysis to show that our approach provides a natural interpretation of classification confidence, see Figure~\ref{fig:acc_dist}. In general, the closer the example to the origin, the lower the overall confidence, as also indicated in other hyperbolic approaches~\cite{khrulkov2020hyperbolic,suris2021learning}.
The red line in Figure~\ref{fig:acc_dist} shows that the distance to the origin in our approach also correlates with angular similarity to the correct class prototype. In other words, the larger the distance to the origin, the higher the likelihood of the classification being correct. Again, this provides a parallel to logistic regression, where the output score can be interpreted as a classification probability, conditioned on the input. We conclude from the analysis that the severity of the linear penalty matters to balance over-confidence and inter-class confusion, while the networks outputs enable a direct interpretation of the classification results.

\subsection{Comparison to hyperspherical prototypes}

\begin{table}[t]
\centering
\caption{\textbf{Comparison to Hyperspherical Prototype Networks} on CIFAR-100 and CIFAR-10 using the same prototypes and a ResNet-32. $^*$ denotes results from code provided by~\cite{mettes2019hyperspherical} in case the number was not reported in the original paper. For both datasets, our approach obtains a high accuracy, especially with few output dimensions.}
\vspace{0.2cm}
\resizebox{0.95\textwidth}{!}{
\begin{tabular}{l c ccccc c cccc}
\toprule
 & & \multicolumn{5}{c}{\textbf{CIFAR-100}} & & \multicolumn{4}{c}{\textbf{CIFAR-10}}\\
\emph{Dimensions} & & 3 & 5 & 10 & 25 & 50 & & 2 & 3 & 4 & 5\\
\midrule
Mettes \etal~\cite{mettes2019hyperspherical} & & 5.5 & 28.7 & 51.1 & 63.0 & 64.7 & & 30.0$^*$ & 84.9$^*$ & 88.8$^*$ & 89.5$^*$\\
\rowcolor{Gray}
This paper & & \textbf{49.0} & \textbf{54.6} & \textbf{59.1} & \textbf{65.7} & \textbf{65.8} & & \textbf{91.2} & \textbf{92.2} & \textbf{92.2} & \textbf{92.3}\\
\bottomrule
\end{tabular}
}
\label{tab:cifar}
\end{table}


\textbf{Setup.} In the second experiment, we compare our approach to its hyperspherical counterpart, in which the output space is the positively curved hypersphere instead of negatively curved hyperbolic space. We draw a comparison to Hyperspherical Prototype Networks~\cite{mettes2019hyperspherical}, which has shown to be competitive or better than conventional softmax cross-entropy and using Euclidean prototypes. For a fair comparison, we use the same prototypes and the same network backbone. The difference lies in the choice of manifold and accompanying loss function. Following~\cite{mettes2019hyperspherical}, we report on CIFAR-100, CIFAR-10\camera{, and CUB Birds 200-2011~\cite{WahCUB_200_2011}} across multiple dimensionalities. On CIFAR-100, we use 3, 5, 10, 25, and 50 output dimensions, while we use 2, 3, 4, and 5 output dimensions for CIFAR-10 \camera{and 3, 10, 50, and 100 for CUB Birds 200-2011. For CUB Birds 200-2011, we keep the main hyperparameters the same as for the CIFAR experiments and use a weight decay of 0.0001, 2110 epochs, and a slope of 0.05. }
\begin{wraptable}{r}{0.5\textwidth}
\vspace{0.15cm}
\caption{\camera{\textbf{Comparison to Hyperspherical Prototype Networks} on CUB Birds 200-2011 using the same prototypes and a ResNet-32. $^*$ denotes results from code provided by~\cite{mettes2019hyperspherical} in case the number was not reported in the original paper.}}
\resizebox{0.5\textwidth}{!}{%
\begin{tabular}{l c cccc}
\toprule
 & & \multicolumn{4}{c}{\textbf{CUB Birds 200-2011}}\\
\emph{Dimensions} & & 3 & 10 & 50 & 100\\
\midrule
Mettes \etal~\cite{mettes2019hyperspherical} & & 1.0$^*$ & 9.5$^*$ & 41.5$^*$ & \textbf{45.6}$^*$\\
\rowcolor{Gray}
This paper & & \textbf{36.6} & \textbf{43.5} & \textbf{43.5} & \textbf{45.6}\\
\bottomrule
\end{tabular}%
}
\label{tab:birds}
\vspace{-0.5cm}
\end{wraptable}

\begin{table}[b]
\centering
\caption{\textbf{Comparison with varying architecture and prototypes} on CIFAR-100. $^*$ denotes results from code provided by~\cite{mettes2019hyperspherical} in case the number was not reported in the original paper. Our approach is applicable to and effective for any choice of network backbone and choice of prototypes.
}
\vspace{0.2cm}
\resizebox{0.95\textwidth}{!}{
\begin{tabular}{l c ccccc c ccccc}
\toprule
 & & \multicolumn{5}{c}{\textbf{DenseNet-121}} & & \multicolumn{5}{c}{\textbf{ResNet-32 w/ prior knowledge}}\\
\emph{Dimensions} & & 3 & 5 & 10 & 25 & 50 & & 3 & 5 & 10 & 25 & 50\\
\midrule
Mettes \etal~\cite{mettes2019hyperspherical} & & 11.5$^*$ & 45.1$^*$ & 60.5 & 69.1 & 71.1$^*$ & & 11.5 & 37.0 & 57.0 & 64.0 & 65.2$^*$\\
\rowcolor{Gray}
This paper & & \textbf{66.1} & \textbf{69.0} & \textbf{71.1} & \textbf{72.7} & \textbf{71.8} & & \textbf{52.2} & \textbf{56.4} & \textbf{62.4} & \textbf{65.9} & \textbf{68.0}\\
\bottomrule
\end{tabular}%
}
\label{tab:densenet}
\noindent
\end{table}
\textbf{Results.} 
We show the classification results for CIFAR-100 and CIFAR-10 on ResNet-32 in Table~\ref{tab:cifar} \camera{and CUB Birds 200-2011 in Table~\ref{tab:birds}}. Across all dimensions, our approach performs better and the gap increases when using fewer dimensions.
On CIFAR-100 for five output dimensions, we obtain 54.6\% accuracy versus 28.7\% for hyperspherical prototypes and the relative difference grows for three dimensions with 49.0\% versus 5.5\%. \camera{On CUB Birds 200-2011 for ten output dimensions, we obtain an accuracy of 43.5\% versus 9.5\% for hyperspherical prototypes. When scaling down to three dimensions, hyperspherical prototypes obtain 1.0\% accuracy and we obtain 36.6\%.} On CIFAR-10, our performance is hardly affected by using few dimensions. The accuracy for our approach in two dimensions is still 91.2\% compared to 92.3\% in five dimensions and 30.0\% in two dimensions for hyperspherical prototypes. The high scores with few dimensions highlights the effectiveness of our approach and is in line with the effectiveness of hyperbolic embeddings in lower dimensionalities~\cite{nickel2017poincare}.

\begin{wraptable}{r}{0.5\textwidth}
\caption{\textbf{Comparative evaluation} to softmax cross-entropy, Euclidean prototypes, and hyperspherical prototypes on CIFAR-100 with ResNet-32. Baseline numbers taken from~\cite{mettes2019hyperspherical}.}
\resizebox{0.5\textwidth}{!}{%
\begin{tabular}{lccc}
\toprule
 & Manifold & \multicolumn{2}{c}{Prior knowledge}\\
 & & w/o & w/\\
\midrule
Softmax CE & - & 62.1 & -\\
Guerriero \etal~\cite{guerriero2018deep} & $\mathbb{R}^{100}$ & 62.0 & -\\
Mettes \etal~\cite{mettes2019hyperspherical} & $\mathbb{S}^{99}$ & 65.0 & 65.2\\
\rowcolor{Gray} This paper & $\mathbb{B}^{50}$ & \textbf{65.8} & \textbf{68.0}\\
\bottomrule
\end{tabular}%
}
\label{tab:compare}
\vspace{-0.25cm}
\end{wraptable}

To show that the obtained results generalize to other network backbones and choice of prototypes, we report additional results in Table~\ref{tab:densenet} for CIFAR-100. Using a DenseNet-121 architecture improves both our approach and the baseline with similar relative performance improvements with lower dimensionalities. We also investigate the addition of prior semantic knowledge in prototypes as per~\cite{mettes2019hyperspherical}. With these prototypes, the improvements with few dimensions holds and additionally boost the performance in high dimensions. With 50 dimensions, we obtain an accuracy of 68.0\%, compared to 65.8\% without using prior knowledge. In Table~\ref{tab:compare}, we show additional comparisons to softmax cross-entropy and Euclidean prototypes, which re-iterate the potential of our approach both with and without prior knowledge for prototypes. Overall, we conclude that our approach is applicable and effective for classification regardless of backbone and choice of prototypes.

\subsection{Comparison to hyperbolic prototypes}

\noindent
\textbf{Setup.} The second experiment has shown that Hyperbolic Busemann Learning boosts the classification results compared to its hyperspherical counterpart. In the third experiment, we draw a comparison to the recent hyperbolic action network~\cite{long2020searching}, the only hyperbolic prototype approach to date which is not restricted to few-shot or zero-shot settings. In~\cite{long2020searching}, action classes are embedded as prototypes in the interior of the  Poincar\'{e} ball based on given hierarchical relations. In contrast, we position prototypes on the ideal boundary of the Poincar\'{e} ball, which can be done both with and without prior hierarchical knowledge. To showcase this possibility, we have performed a comparative evaluation on ActivityNet\camera{~\cite{caba2015activitynet} and Mini-Kinetics~\cite{xie2018rethinking}} for the search by action name task. \camera{The ActivityNet dataset is a large untrimmed video dataset, which have been trimmed in~\cite{long2020searching} based on the available temporal annotations. The trimmed dataset contains $\sim$23K videos and 200 classes, with a $\sim$15K split for training and $\sim$8K for validation. Alongside trimming, a modified and more balanced ActivityNet hierarchy with three levels of granularity is provided~\cite{long2020searching}. The Mini-Kinetics dataset consists of $\sim$83K trimmed videos and 200 classes, with a $\sim$78K split for training and $\sim$5K for validation.We use the hierarchy with three levels of granularity for the Mini-Kinetics as detailed in~\cite{long2020searching}.}

\begin{table}[b]
\centering
\caption{\textbf{Hyperbolic prototypes comparison} on ActivityNet for the search by video name task. When using prototypes from hierarchical knowledge, we score best across the three metrics when using 10 output dimensions. Where~\cite{long2020searching} is restricted to prototypes from hierarchies, our approach can also employ knowledge-free prototypes, \eg from separation, to boost the standard mAP metric.}
\vspace{0.2cm}
\resizebox{0.9\textwidth}{!}{
\begin{tabular}{l c cccc c cccc}
\toprule
 & & \multicolumn{4}{c}{\textbf{Prototypes from hierarchies}} & & \multicolumn{4}{c}{\textbf{Prototypes from separation}}\\
\emph{Dimensions} & & 2 & 10 & 20 & 50 & & 2 & 10 & 20 & 50\\
\midrule
\rowcolor{Gray}
Long \etal~\cite{long2020searching} & & & & & & & & & & \\
mAP & & 26.9 & 68.3 & 66.9 & 67.1 & & - & - & - & -\\
s-mAP & & 72.0 & 93.8 & 92.7 & 93.1 & & - & - & - & -\\
c-mAP & & 91.9 & 97.7 & 97.2 & 97.5 & & - & - & - & -\\
\midrule
\rowcolor{Gray}
This paper & & & & & & & & & & \\
mAP & & 40.8 & \textbf{70.6} & 66.3 & 65.6 & & 44.1 & 78.0 & \textbf{79.4} & 78.5\\
s-mAP & & 79.2 & \textbf{95.1} & 94.5 & 94.7 & & 48.9 & \textbf{88.6} & 86.2 & 86.7\\
c-mAP & & 92.9 & 98.0 & 97.8 & \textbf{98.2} & & 60.8 & 84.0 & 90.6 & \textbf{91.7}\\
\bottomrule
\end{tabular}%
}
\label{tab:actions}
\end{table}

We use both the prototypes from prior hierarchical knowledge~\cite{long2020searching} with entailment cones~\cite{ganea2018hyperbolic2} and knowledge-free prototypes from separation~\cite{mettes2019hyperspherical}. For our approach, we project the hierarchical prototypes on the ideal boundary through $\ell_2$-normalization. \camera{For both datasets}, we use the same pre-trained model for feature extraction and three-layer MLP for final classification as~\cite{long2020searching}, with Adam as optimizer, a learning rate of 1e-4, and learning rate drop after 900 iterations. All other hyperparameters are the same as in previous experiments. Following~\cite{long2020searching}, we report three metrics: mean Average Precision (mAP), sibling-mAP (s-mAP), which counts a prediction as correct if it is within 2-hops away in the hierarchy, and cousin-mAP (c-mAP), which does the same for 4-hops.
\\\\
\noindent
\textbf{Results.} The comparative evaluation on ActivityNet \camera{and Mini-Kinetics} is shown in Table~\ref{tab:actions} and Table~\ref{tab:actions2}. \camera{When hierarchical knowledge is available, we only position the leaf nodes on the boundary and the internal nodes of the hierarchy will be positioned inside the Poincaré ball. 
}Using the prototypes from hierarchical knowledge, we find that both our approach and the baseline favor low-dimensional embeddings. Our highest score for the mAP metric \camera{on Activity-Net} (70.6\%) outperforms the baseline (68.3\%) and the same holds for sibling-mAP (95.1\% versus 93.8\%) and cousin-mAP (98.2\% versus 97.5\%). \camera{The same holds on Mini-Kinetics.} These results highlight our ability to employ prototypes with prior knowledge. We note that the baseline performs additional tuning for the curvature in hyperbolic space, while we only use a standard curvature of one. A unique benefit of our approach is the ability to employ hyperbolic spaces without requiring hierarchical priors for prototypes. We find that positioning prototypes solely based on data-independent separation provides a boost in mAP, from 68.3\% by~\cite{long2020searching} to 79.4\%. In contrast, the hierarchical metrics score lower with the prototypes from separation.
This is a direct result of the lack of hierarchical knowledge during prototype positioning.
\camera{An explanation for the better performance of hierarchical prototypes over separation prototypes for s-mAP and c-mAP compared to standard mAP comes from how the prototypes are constructed. For the hierarchical prototypes, classes with the same parent will be pulled together, while other classes will be pushed away. This boosts the s-MAP and c-mAP since these metrics allow for confusion with hierarchically related classes.  However, the standard mAP considers all other classes as negatives. In that case, separation prototypes are preferred as they push all classes away during prototype construction.} Overall, we conclude that Hyperbolic Busemann Learning provides both an effective and general approach for prototype-based learning with hyperbolic output spaces.

\begin{table}[t]
\centering
\caption{\camera{\textbf{Hyperbolic prototypes comparison} on mini-Kinetics for the search by video name task. When comparing the models trained on the prototypes from hierarchies, our model outperforms when using low dimension embeddings.}
}
\vspace{0.2cm}
\resizebox{0.9\textwidth}{!}{
\begin{tabular}{l c cccc c cccc}
\toprule
 & & \multicolumn{4}{c}{\textbf{Prototypes from hierarchies}} & & \multicolumn{4}{c}{\textbf{Prototypes from separation}}\\
\emph{Dimensions} & & 2 & 10 & 20 & 50 & & 2 & 10 & 20 & 50\\
\midrule
\rowcolor{Gray}
Long \etal~\cite{long2020searching} & & & & & & & & & & \\
mAP & & 28.8 & 64.4 & 63.5 & 64.5 & & - & - & - & -\\
s-mAP & & 79.0 & 94.2 & 94.1 & 94.5 & & - & - & - & -\\
c-mAP & & 91.0 & 96.5 & 96.3 & 96.5 & & - & - & - & -\\
\midrule
\rowcolor{Gray}
This paper & & & & & & & & & & \\
mAP & & 51.7 & \textbf{67.5} & 65.0 & 64.0 & & 56.4 & 70.7 & 71.6 & \textbf{71.7}\\
s-mAP & & 88.3 & \textbf{95.6} & 95.1 & 95.0 & & 62.0 & 76.8 & 78.8 & \textbf{80.0}\\
c-mAP & & 94.1 & \textbf{97.1} & \textbf{97.1} & 97.0 & & 71.0 & 82.5 & 84.5 & \textbf{85.6}\\
\bottomrule
\end{tabular}%
}
\label{tab:actions2}
\end{table}

\section{Conclusions}

In this paper, we introduce Hyperbolic Busemann Learning with ideal prototypes. Where existing work on prototype-based learning in hyperbolic space requires prior knowledge to operate, we propose to place prototypes at the ideal boundary of the Poincar\'{e} ball. This enables a prototype positioning without the need for prior knowledge. We introduce the penalized Busemann loss to optimize examples with respect to ideal prototypes. We provide both theoretical and empirical support for the idea of ideal prototypes and the proposed loss. We prove its equivalence to logistic regression in dimension one and show on three datasets that our approach outperforms recent hyperspherical and hyperbolic prototype approaches. 

\textbf{Broader impact.} Our approach allows for both knowledge-based and knowledge-free prototypes and this choice matters depending on its broader utilization. Knowledge-free prototypes make fewer errors but the remaining ones are less hierarchically consistent. Hence critical applications such as in the medical domain benefit from the former, while search-based applications benefit form the latter.


\bibliography{egbib}
\bibliographystyle{plain}

\appendix

\section{Full derivation of Busemann function in Poincar\'{e} model} \label{appendix}

In the Poincar\'e model, the unit-speed geodesic $\gamma_p(t)$ from the origin towards the ideal point $p$ is given by 
\[\gamma_p(t) = p \tanh(t/2).\]
Inserting into~\eqref{eqn:busf} and using the hyperbolic distance as given in~\eqref{eqn:gdist} we obtain the following representation of the Busemann function:
\[b_p(z) = \lim_{t \rightarrow \infty} \Big(\mathrm{arcosh}\,\left(1 + x(t)\right) - t\Big),\]
where 
\[x(t) = 2 \frac{\left\Vert p \tanh(t/2) - z\right\Vert^2}{(1 - \tanh(t/2)^2)(1 - \left\Vert z\right\Vert^2)}.\]
Using that $\tanh(t/2) = (e^t - 1)/(e^t + 1)$, it follows that
\[\lim_{t \to \infty} e^{-t}x(t) = \frac{1}{2}\frac{\Vert p - z\Vert^2}{1 - \Vert z\Vert^2}.\]
In particular, this shows that $x(t)$ grows to infinity as $t \to \infty$. Using the representation of the inverse hyperbolic cosine in terms of the logarithm, we obtain (with Landau's small-$o$ notation)
\[\mathrm{arcosh}\,\left(1 + x(t)\right) = \log\left(1 + x(t) + \sqrt{x(t)^2 + 2x(t)}\right) = \log \Big(2x(t) + o(x(t))\Big),\]
as $x(t) \to \infty$. Thus, it follows that 
\begin{eqnarray*}
b_p(z) &=& \lim_{t \rightarrow \infty} \left\{\log \Big(2x(t) + o(x(t))\Big) - t\right\} = \lim_{t \rightarrow \infty} \log \Big(2 e^{-t} x(t) + e^{-t}o(x(t))\Big) = \\
&=& \log \left(2 \lim_{t \to \infty} e^{-t} x(t) + 0\right) = \log \frac{\Vert p - z\Vert^2}{1 - \Vert z\Vert^2},
\end{eqnarray*}
as claimed in~\eqref{eqn:bus}.

\end{document}